\pdfoutput=1
\documentclass[lettersize,journal]{IEEEtran}
\usepackage{amsmath,amsfonts}
\usepackage{algorithmic}
\usepackage{algorithm}
\usepackage{array}
\usepackage[caption=false,font=normalsize,labelfont=sf,textfont=sf]{subfig}
\usepackage{textcomp}
\usepackage{stfloats}
\usepackage{url}
\usepackage{verbatim}
\usepackage{graphicx}
\usepackage{cite}
\usepackage{amsmath,amssymb,amsfonts}
\usepackage{graphicx}
\usepackage{booktabs}
\usepackage{multirow}
\usepackage{algorithm}
\usepackage{algorithmic}
\usepackage{amsthm}
\usepackage{bm}

\newtheorem{theorem}{Theorem}

\begin{document}

\title{Robust CurveMoE: Multi-Norm Adversarial Defense for Mixture-of-Experts Models via Mode Connectivity}

% \author{Anonymous Authors
        % <-this % stops a space
%\thanks{This paper was produced by the IEEE Publication Technology Group. They are in Piscataway, NJ.}% <-this % stops a space
%\thanks{Manuscript received April 19, 2021; revised August 16, 2021.}
% }

\author{
Xu Zhang and Ren Wang
\thanks{The authors are with the Department of Electrical and Computer Engineering, Illinois Institute of Technology, Chicago, IL, USA.}
}

% The paper headers
%\markboth{Journal of \LaTeX\ Class Files,~Vol.~14, No.~8, August~2021}%
%{Shell \MakeLowercase{\textit{et al.}}: A Sample Article Using IEEEtran.cls for IEEE Journals}

%\IEEEpubid{0000--0000/00\$00.00~\copyright~2021 IEEE}
% Remember, if you use this you must call \IEEEpubidadjcol in the second
% column for its text to clear the IEEEpubid mark.

\maketitle

\begin{abstract}
Multi-norm adversarial defense aims to protect neural networks against perturbations defined by different norm constraints, but existing methods typically optimize competing robustness objectives within a single parameter configuration, leading to substantial training cost and unfavorable robustness trade-offs. We propose Robust CurveMoE, an efficient mixture-of-experts framework that connects models specialized for different perturbation norms through a low-loss path and exploits the complementary robustness profiles of models along this path. Robust CurveMoE derives clean and norm-specialized experts from robustness-constrained curve locations and selectively expertizes only influential layers, while sharing the remaining parameters across routing paths. To further reduce curve-construction cost, we introduce contribution-guided partial updating, which selects influential curve parameters using initialization-based gradient scores. We also theoretically bound the objective gap between partial and full curve optimization. Experiments on CIFAR-100 and ImageNet-100 with WideResNet and Vision Transformer architectures show that Robust CurveMoE consistently improves clean, norm-specific, and Union accuracy over MSD and ERMC. In particular, it improves Union accuracy by 2.37 and 2.13 percentage points over the strongest baseline on CIFAR-100 and ImageNet-100, respectively. Extensive ablations further validate the effectiveness of partial updating, selective expertization, and robustness-constrained expert selection.
\end{abstract}

\begin{IEEEkeywords}
Adversarial robustness, multi-norm adversarial defense, mixture of experts, robust mode connectivity, sparse routing, partial updating.
\end{IEEEkeywords}

\section{Introduction}
\label{sec:introduction}

Deep neural networks have achieved remarkable performance in visual recognition, yet their predictions remain vulnerable to carefully crafted adversarial perturbations~\cite{szegedy2013intriguing,goodfellow2014explaining}. Adversarial training is one of the most effective defenses and is commonly formulated as a min--max optimization problem against worst-case perturbations within a prescribed norm-bounded region~\cite{madry2017towards}. However, robustness obtained under one threat model does not necessarily transfer to others. A model trained against $\ell_\infty$ perturbations, for example, may remain vulnerable to attacks constrained by $\ell_1$ or $\ell_2$ norms. This mismatch motivates multi-norm adversarial defense, which seeks to protect a model against the union of multiple perturbation geometries encountered at deployment.

Existing multi-norm defenses typically optimize multiple adversarial objectives within a single parameter configuration, for example by averaging norm-specific losses, selecting the strongest attack, or constructing updates across different perturbation sets~\cite{tramer2019adversarial,maini2020adversarial,croce2022adversarial}. Although effective, this paradigm faces an inherent difficulty: different perturbation norms induce distinct robustness objectives that may compete during optimization. Consequently, improving robustness against one norm can compromise robustness against another or reduce clean accuracy. Moreover, multi-norm adversarial optimization requires generating attacks under several threat models and therefore introduces substantial training overhead. These limitations raise a natural question: rather than forcing all robustness behaviors into a single parameter configuration, can complementary norm-specific behaviors be preserved and adaptively exploited for each input?

Mixture-of-experts (MoE) architectures provide a promising mechanism for this purpose. By maintaining multiple specialized experts and activating only a sparse subset for each input, MoEs can increase model capacity while preserving efficient conditional computation~\cite{shazeer2017outrageously,fedus2022switch,riquelme2021scaling}. Recent studies have further demonstrated the potential of MoE architectures for adversarial robustness~\cite{puigcerver2022adversarial,zhang2023robust,zhang2025optimizing}. Nevertheless, directly applying MoE to multi-norm defense remains challenging. Independently training complete experts for different perturbation norms requires repeated adversarial optimization and substantial parameter storage. More importantly, norm-specialized experts can exhibit highly imbalanced cross-norm robustness. An adversary that manipulates the routing decision toward an expert vulnerable to the current perturbation may therefore compromise the entire MoE even if that expert performs well under its specialized threat model. Effective multi-norm MoE defense thus requires both an efficient source of complementary experts and a principled mechanism for controlling their cross-norm vulnerability.

Robust mode connectivity provides such a structured source of experts. Mode-connectivity studies have shown that independently trained solutions can often be connected by nonlinear low-loss paths in parameter space~\cite{garipov2018loss,draxler2018essentially}. In adversarial settings, connectivity curves between norm-specialized models can contain intermediate models with distinct robustness characteristics~\cite{wang2023robust}. Our experiments further reveal that different regions of such curves naturally favor different objectives: models near the endpoints retain stronger robustness to their corresponding norms, while intermediate models can exhibit complementary clean and $\ell_2$ performance. Existing robust mode-connectivity methods, however, typically deploy only a single model selected from the learned curve, leaving much of this specialization unused. Using multiple complete curve models as experts would preserve this diversity but substantially increase parameter cost. In addition, constructing a robust curve itself remains expensive because multi-norm adversarial optimization is repeatedly performed at sampled curve locations.

Motivated by these observations, we propose \emph{Robust CurveMoE}, an efficient MoE framework that converts the complementary models encoded by a robust connectivity curve into input-adaptive multi-norm defense. We first connect $\ell_1$- and $\ell_\infty$-robust endpoints and optimize the resulting curve under a multi-norm adversarial objective, yielding a continuous pool of models with complementary robustness profiles. Rather than replicating complete curve models, we measure the adversarial sensitivity of candidate layers and instantiate expert-specific parameters only in the most influential layers, while sharing the remaining parameters. We further constrain expert selection to curve locations with sufficiently strong worst-case cross-norm robustness, preventing highly specialized but vulnerable models from entering the expert pool. Within this candidate set, clean and norm-specific specialists are selected to construct the experts, while a parameter-compatible model provides the shared backbone. Independent top-1 routers at the selected MoE layers then perform input-dependent sparse expert selection.

We additionally reduce the computational cost of constructing the robust connectivity curve through contribution-guided partial updating. Instead of optimizing all trainable curve parameters, Robust CurveMoE estimates their contribution to the multi-norm objective at initialization and updates only the most influential subset. We theoretically show that the objective gap between full and restricted curve optimization is bounded by the optimization displacement excluded by the frozen parameters, motivating the retention of parameters most likely to undergo substantial adaptation. Since the full optimization displacement is unavailable beforehand, we use the squared initialization gradient as a practical contribution proxy. Experiments show that this strategy preserves the robustness characteristics of the fully optimized curve while substantially reducing curve-training cost.

The main contributions of this work are summarized as follows:
\begin{itemize}
    \item To the best of our knowledge, Robust CurveMoE is the first multi-norm adversarial defense specifically designed for MoE architectures. It transforms the complementary robustness behaviors encoded along a robust connectivity curve into specialized experts and exploits them through input-dependent sparse routing.

    \item We develop a robustness-constrained expert selection strategy that selects complementary specialists from the robust connectivity curve while explicitly controlling their cross-norm vulnerability. By excluding overly specialized curve models with poor worst-case robustness, the resulting expert pool is less susceptible to adversarial routing toward vulnerable experts.

    \item We propose contribution-guided partial curve updating to reduce curve-construction cost. We theoretically bound the objective gap between partial and full curve optimization and derive an initialization-based gradient criterion for selecting influential parameters before curve training.

    \item Experiments on CIFAR-100 and ImageNet-100 with WideResNet and Vision Transformer architectures demonstrate that Robust CurveMoE consistently improves clean, norm-specific, and Union robustness over representative baselines. Extensive ablations further validate the effectiveness of partial updating, selective expertization, and robustness-constrained expert selection.
\end{itemize}

\section{Related Work}

\paragraph{Adversarial and multi-norm robustness}
Deep neural networks are vulnerable to adversarial examples, in which small, carefully crafted perturbations can induce severe prediction errors while remaining nearly imperceptible to humans~\cite{szegedy2013intriguing,goodfellow2014explaining}. Adversarial training remains one of the most effective defenses and is typically formulated as a min--max optimization problem over perturbations within a prescribed norm ball~\cite{madry2017towards}. Its robustness, however, is usually specialized to the threat model used during training and may transfer poorly across perturbation norms. Multi-norm defenses address this limitation by optimizing over the union of multiple threat models. Representative approaches aggregate norm-specific losses, select the strongest attack during training, or train over representative extreme norms~\cite{tramer2019adversarial,maini2020adversarial,croce2022adversarial,maini2022perturbation}. Although these methods improve robustness against multiple attacks, they generally encode competing norm-specific objectives within a single parameter configuration. This can lead to an unfavorable compromise between clean accuracy and robustness across different norms. Moreover, generating adversarial examples separately for multiple threat models introduces considerable training overhead. In contrast, we investigate multi-norm defense from an MoE-oriented perspective, preserving complementary robustness behaviors across different experts and adaptively routing each input to the most suitable expert.

\paragraph{Mixture-of-experts and adversarial robustness}
Mixture-of-experts (MoE) models employ a routing module to activate a sparse subset of experts for each input, increasing model capacity without a proportional increase in computation~\cite{shazeer2017outrageously,fedus2022switch}. Their conditional-computation structure has been widely adopted in large-scale language models and extended to vision architectures~\cite{riquelme2021scaling}. Recent work has also investigated the adversarial robustness of MoEs. Puigcerver et al. show that sparse MoEs can exhibit improved robustness over dense models with comparable computational cost~\cite{puigcerver2022adversarial}. AdvMoE decomposes the robustness of CNN-based MoEs into router and expert robustness and introduces alternating adversarial training to improve both components~\cite{zhang2023robust}. Standard and robust experts have also been combined to alleviate the clean--robust accuracy trade-off~\cite{zhang2025optimizing}. Nevertheless, existing robust MoE approaches primarily consider robustness under a single threat model or combine experts with standard-versus-robust roles. They do not explicitly preserve and route among experts specialized to different perturbation geometries. In addition, directly training multiple independent norm-specific experts would substantially increase expert construction cost. Robust CurveMoE instead derives correlated yet complementary experts from a robust connectivity curve and performs input-dependent sparse routing to achieve multi-norm robustness.

\paragraph{Mode connectivity and robust model composition}
Mode connectivity reveals that independently trained solutions can often be connected by simple low-loss paths in weight space~\cite{garipov2018loss,draxler2018essentially}. This perspective has been used to study optimization stability, the structure of solution basins, and lottery-ticket subnetworks~\cite{frankle2020linear}. Related weight-space composition methods, such as model soups, further show that combining appropriately related models can improve predictive performance without deploying a conventional ensemble~\cite{wortsman2022model}. In adversarial learning, mode connectivity has been used to study robustness barriers between standard and adversarially trained solutions and to search for models exhibiting different robustness profiles against multiple $\ell_p$ attacks~\cite{zhao2020bridging,wang2023robust}. These studies suggest that a robust connectivity curve can provide a structured source of complementary models. However, existing methods generally treat individual curve locations as standalone models rather than organizing them as input-adaptive experts. Furthermore, constructing the curve requires adversarial optimization at sampled locations in weight space, while deploying multiple full curve models incurs substantial storage and inference costs. Robust CurveMoE addresses both limitations by selectively expertizing the influential components of curve-derived models and partially updating the curve parameters, reducing expert deployment and construction costs, respectively.

\section{Robust CurveMoE}
\label{sec:robust_curvemoe}

\begin{figure*}[htbp]
    \centering
    \includegraphics[width=0.98\textwidth]{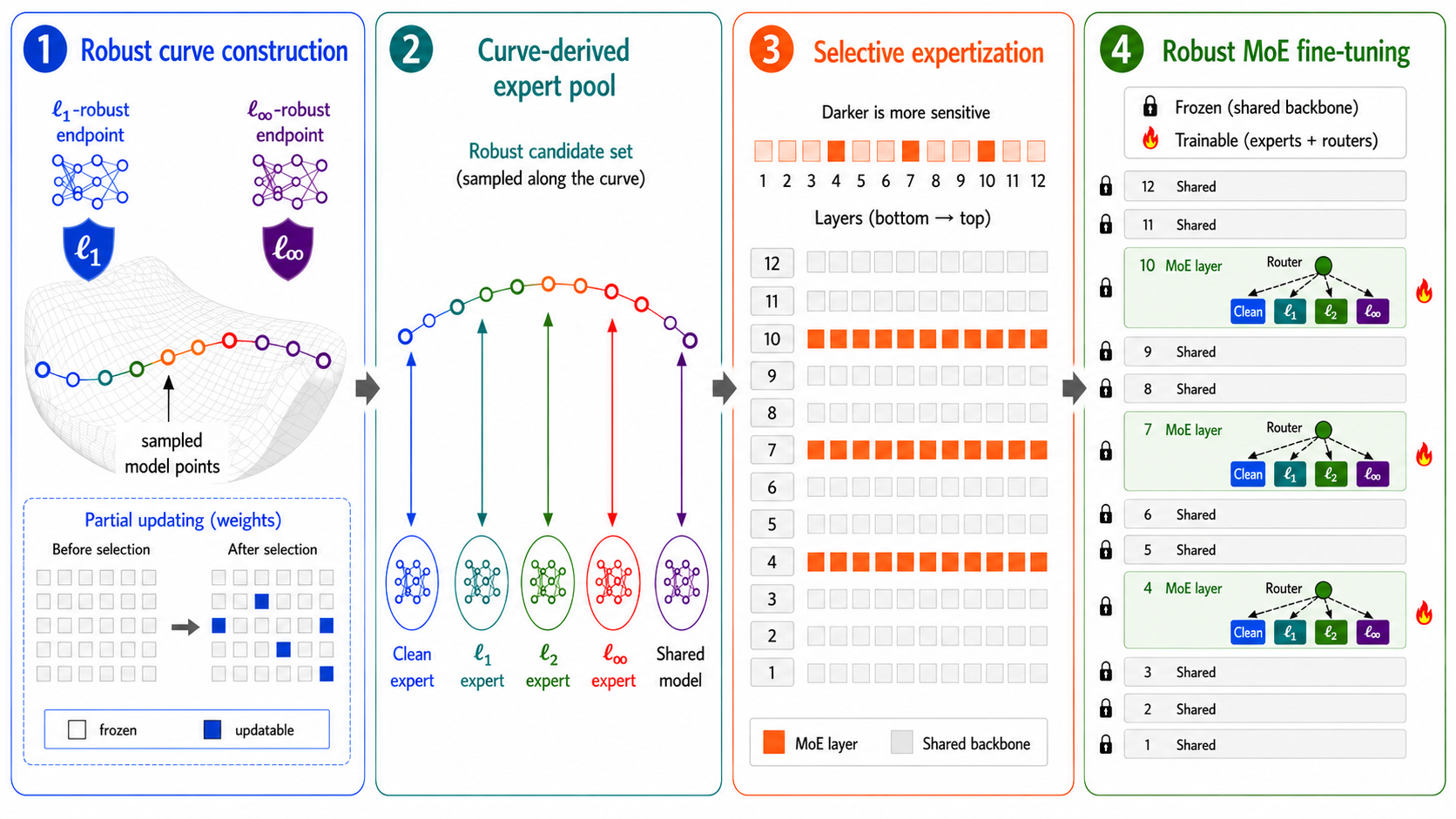}
    \caption{Overview of Robust CurveMoE. Starting from $\ell_1$-robust and $\ell_\infty$-robust endpoint models, we first construct a robust connectivity curve using contribution-guided partial updating. We then sample representative models from the curve to form a robustness-constrained candidate set, from which clean and norm-specialized experts as well as a shared model are selected. Based on the sensitivity of different layers, only the most influential layers are converted into MoE layers, while the remaining layers are shared. Finally, the constructed MoE is fine-tuned by updating the expert and router parameters while keeping the shared backbone frozen.}
    \label{fig:overview}
\end{figure*}

An overview of Robust CurveMoE is shown in Fig.~\ref{fig:overview}. Robust CurveMoE is an efficient multi-norm adversarial defense designed explicitly for MoE architectures. We first train a robust connectivity curve and sample models along the curve to construct a structured pool of experts with complementary robustness profiles. These curve-derived experts are then integrated into an MoE architecture, where input-dependent top-1 routing selects a suitable expert for each input. To retain the computational efficiency of sparse MoE training, we identify the layers most influential to multi-norm robustness and instantiate experts only at the selected layers, while keeping the remaining layers shared. We further introduce contribution-guided partial curve updating to reduce the adversarial optimization cost of constructing the expert pool. Together, these designs enable Robust CurveMoE to provide multi-norm robustness while maintaining efficient expert construction, training, and inference.

\subsection{Preliminaries}
\label{sec:preliminaries}

\paragraph{Multi-norm adversarial defense}
Let $f_{\boldsymbol{\theta}}$ denote a neural network parameterized by $\boldsymbol{\theta}$, and let $(\boldsymbol{x},y)$ be an input-label pair drawn from a data distribution $\mathcal{D}$. An adversarial example is obtained by adding a small perturbation $\boldsymbol{\delta}$ to $\boldsymbol{x}$ so that the perturbed input $\boldsymbol{x}+\boldsymbol{\delta}$ causes an incorrect prediction. The allowable perturbations are commonly constrained by an $\ell_p$ norm as $\|\boldsymbol{\delta}\|_p\leq\epsilon_p$, where $p$ specifies the geometry of the threat model and $\epsilon_p$ controls its strength. Adversarial training improves robustness by minimizing the loss on the strongest perturbation within a prescribed set:
\begin{equation}
\min_{\boldsymbol{\theta}}\ \mathbb{E}_{(\boldsymbol{x},y)\sim\mathcal{D}}\left[\max_{\boldsymbol{\delta}\in\Delta_p}\mathcal{L}\left(f_{\boldsymbol{\theta}}(\boldsymbol{x}+\boldsymbol{\delta}),y\right)\right],
\label{eq:single_norm_at}
\end{equation}
where $\Delta_p=\{\boldsymbol{\delta}:\|\boldsymbol{\delta}\|_p\leq\epsilon_p\}$ and $\mathcal{L}$ is the classification loss. Because a model trained against one perturbation norm may remain vulnerable to others, multi-norm defense considers a collection of threat models $\mathcal{P}$ and seeks robustness over their union:
\begin{equation}
\min_{\boldsymbol{\theta}}\ \mathbb{E}_{(\boldsymbol{x},y)\sim\mathcal{D}}\left[\max_{p\in\mathcal{P}}\ \max_{\boldsymbol{\delta}\in\Delta_p}\mathcal{L}\left(f_{\boldsymbol{\theta}}(\boldsymbol{x}+\boldsymbol{\delta}),y\right)\right].
\label{eq:multi_norm_at}
\end{equation}
In this work, $\mathcal{P}$ contains the $\ell_1$, $\ell_2$, and $\ell_\infty$ threat models. Conventional multi-norm defenses optimize these competing robustness objectives within a single parameter configuration, which may lead to an unfavorable compromise across different norms and requires adversarial examples to be generated under multiple threat models.

\paragraph{Mixture-of-experts architectures}
A mixture-of-experts (MoE) model consists of a set of expert networks and a router that determines which expert should process each input. Given an input representation $\boldsymbol{h}$, the router produces a score for each of the $K$ experts:
\begin{equation}
\boldsymbol{\pi}(\boldsymbol{h})=\operatorname{softmax}\left(g_{\boldsymbol{\phi}}(\boldsymbol{h})\right),
\label{eq:router_probability}
\end{equation}
where $g_{\boldsymbol{\phi}}$ is a learnable routing function parameterized by $\boldsymbol{\phi}$. Under top-1 routing, only the expert with the highest routing score is activated:
\begin{equation}
k^*(\boldsymbol{h})=\arg\max_{k\in\{1,\ldots,K\}}\pi_k(\boldsymbol{h}), \qquad \boldsymbol{h}'=E_{k^*(\boldsymbol{h})}(\boldsymbol{h}),
\label{eq:top1_routing}
\end{equation}
where $E_k$ denotes the $k$-th expert. This conditional-computation mechanism allows an MoE model to increase its parameter capacity and learn specialized behaviors without activating every expert for every input. Consequently, sparse routing avoids a proportional increase in computation as the number of experts grows and enables efficient inference by evaluating only the selected expert. These architectural benefits are largely absent from conventional dense multi-norm defenses, which process every input using the same complete parameter configuration. Robust CurveMoE exploits this structure by preserving complementary robustness behaviors across experts and routing each input to a single suitable expert.

\paragraph{Mode connectivity}
Mode connectivity studies whether two independently trained neural networks can be connected by a continuous path in parameter space while maintaining low loss along the path. Let $\boldsymbol{\theta}_1$ and $\boldsymbol{\theta}_2$ denote two trained endpoint models. A parameterized curve $\boldsymbol{\phi}_{\boldsymbol{\omega}}(t)$, where $t\in[0,1]$, connects them such that $\boldsymbol{\phi}_{\boldsymbol{\omega}}(0)=\boldsymbol{\theta}_1$ and $\boldsymbol{\phi}_{\boldsymbol{\omega}}(1)=\boldsymbol{\theta}_2$. We employ a quadratic B\'ezier curve~\cite{wang2023robust}:
\begin{equation}
\boldsymbol{\phi}_{\boldsymbol{\omega}}(t)=(1-t)^2\boldsymbol{\theta}_1+2t(1-t)\boldsymbol{\omega}+t^2\boldsymbol{\theta}_2,
\label{eq:bezier_curve}
\end{equation}
where the endpoints $\boldsymbol{\theta}_1$ and $\boldsymbol{\theta}_2$ remain fixed and the bend parameter $\boldsymbol{\omega}$ is optimized to obtain low-loss models along the curve. When the two endpoints are trained against different perturbation norms, models sampled from different curve locations can exhibit distinct yet complementary robustness profiles. This provides a structured source of experts for multi-norm defense without requiring every expert to be trained independently from scratch.

\subsection{Curve-Derived Expert Pool for Multi-Norm MoE}
\label{sec:curve_expert_pool}

An effective MoE architecture relies on a collection of experts with complementary capabilities. For multi-norm adversarial defense, a straightforward strategy is to train separate experts against different perturbation norms. However, independently training multiple norm-specific experts requires repeated adversarial optimization and substantially increases the cost of constructing the expert pool. We instead obtain candidate experts from a single robust connectivity curve connecting models trained under different threat models.

We construct the expert pool from a robust connectivity curve between two models specialized to different perturbation norms. Specifically, the endpoint $\boldsymbol{\theta}_1$ is obtained through adversarial training against $\ell_1$ perturbations, while the endpoint $\boldsymbol{\theta}_2$ is trained against $\ell_\infty$ perturbations. These two norms represent geometrically distinct threat models and provide complementary robustness characteristics at the endpoints. We connect them using the quadratic B\'ezier curve defined in Eq.~\eqref{eq:bezier_curve}. During curve training, the endpoint parameters $\boldsymbol{\theta}_1$ and $\boldsymbol{\theta}_2$ remain fixed, and only the bend parameter $\boldsymbol{\omega}$ is optimized.

For each mini-batch, we sample a curve location $t$ from the uniform distribution over $[0,1]$, denoted by $t\sim\mathcal{U}(0,1)$, and instantiate the corresponding model $f_{\boldsymbol{\phi}_{\boldsymbol{\omega}}(t)}$. To encourage robustness throughout the curve rather than only at its endpoints, we optimize the sampled model against the union of the $\ell_1$, $\ell_2$, and $\ell_\infty$ threat models. The robust curve objective is formulated as
\begin{equation}
\min_{\boldsymbol{\omega}}\;
\mathbb{E}_{(\boldsymbol{x},y)\sim\mathcal{D}}
\mathbb{E}_{t\sim\mathcal{U}(0,1)}
\left[
\max_{p\in\mathcal{P}}
\max_{\boldsymbol{\delta}\in\Delta_p}
\mathcal{L}
\left(
f_{\boldsymbol{\phi}_{\boldsymbol{\omega}}(t)}
(\boldsymbol{x}+\boldsymbol{\delta}),
y
\right)
\right],
\label{eq:robust_curve_objective}
\nonumber
\end{equation}
where $\mathcal{P}=\{1,2,\infty\}$. In practice, we approximate the inner maximization using the multi-steepest-descent (MSD) attack~\cite{maini2020adversarial}, which generates norm-specific candidates and selects the strongest candidate for each training sample.

After training, the resulting curve defines a continuous family of robust models. Given a set of representative locations $\mathcal{T}=\{t_1,\ldots,t_K\}$, we construct the candidate expert pool as
\begin{equation}
\mathcal{E}_{\mathrm{curve}}
=
\left\{
f_{\boldsymbol{\phi}_{\boldsymbol{\omega}}(t_k)}
\,\middle|\,
t_k\in\mathcal{T}
\right\}.
\label{eq:curve_expert_pool}
\end{equation}
Figure~\ref{fig:curve_expert_specialization} illustrates the clean and norm-specific adversarial performance of WideResNet-28-10 models sampled along the robust connectivity curve on CIFAR-100. Distinct regions of the curve exhibit different robustness preferences. Models near the $\ell_1$-robust endpoint generally retain stronger $\ell_1$ robustness, whereas models closer to the $\ell_\infty$-robust endpoint achieve better $\ell_\infty$ robustness. In contrast, $\ell_2$ robustness is stronger at intermediate curve locations, forming two local performance peaks away from the endpoints. These observations indicate that different curve regions naturally specialize toward different evaluation objectives rather than representing redundant parameter configurations. The robust connectivity curve therefore provides a structured source of complementary candidate experts for multi-norm MoE construction.

\begin{figure}[htbp]
    \centering
    \vspace{-5pt}
    \includegraphics[width=0.98\columnwidth]{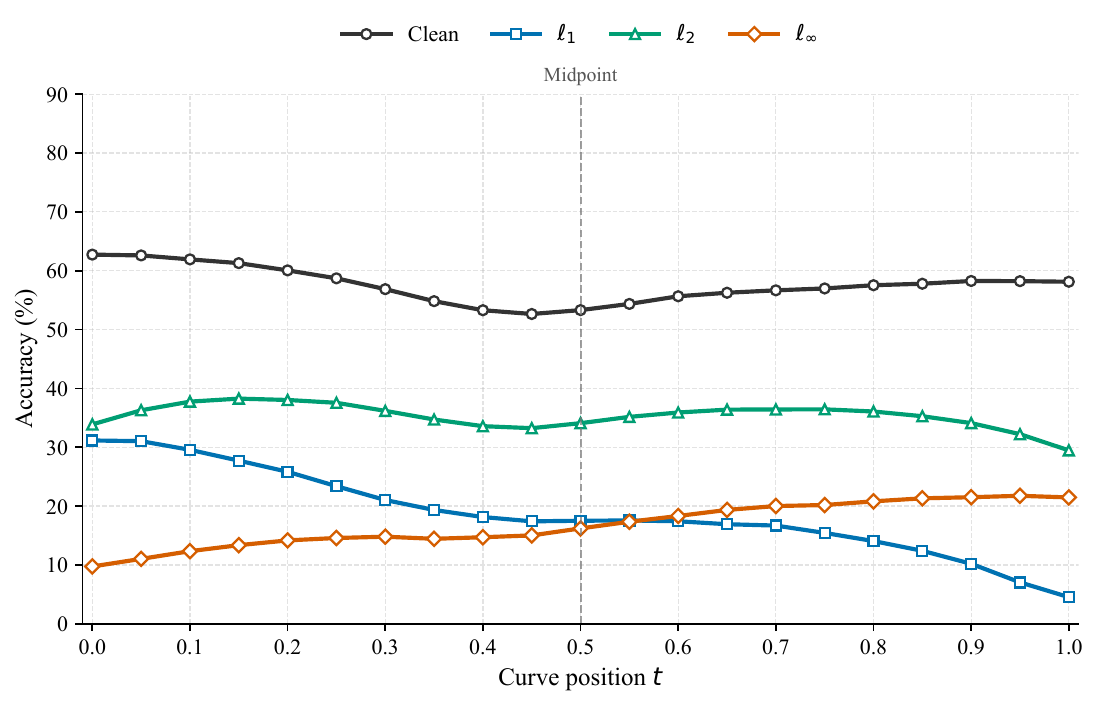}
    \vspace{-5pt}
    \caption{Clean and norm-specific adversarial accuracy along the robust connectivity curve of WideResNet-28-10 on CIFAR-100, evaluated using AutoAttack. Different curve locations exhibit distinct robustness preferences.}
    \label{fig:curve_expert_specialization}
\end{figure}

The complementary models along the curve can be exploited through either ensemble aggregation or an MoE architecture. We argue that an MoE architecture is better suited to preserving and exploiting their distinct robustness profiles. Conventional ensemble strategies such as FGE~\cite{garipov2018loss} aggregate predictions from multiple constituent models for every input, regardless of their individual robustness preferences. For example, under an $\ell_1$ attack, a model near the $\ell_1$-robust endpoint may correctly classify the adversarial input, while models from other curve regions that are less robust to $\ell_1$ perturbations may produce incorrect predictions. Aggregating these predictions can dilute the contribution of the $\ell_1$-robust model and potentially alter the final prediction. In contrast, an MoE architecture employs input-dependent sparse routing to select a suitable expert, thereby avoiding prediction aggregation across experts with different robustness preferences. This conditional routing mechanism better preserves the complementary robustness behaviors observed along the curve and motivates our curve-derived MoE construction.

\subsection{Efficient Robust CurveMoE Construction and Optimization}
\label{sec:efficient_moe_construction}

\paragraph{Sensitivity-guided selective expertization}
A straightforward way to construct an MoE from the curve-derived models is to treat each selected model as a complete expert and employ a router to select among them. Although this design preserves the complementary robustness profiles along the curve, replicating the entire network for every expert substantially increases the number of expert-specific parameters and undermines the parameter efficiency of sparse MoE architectures. We therefore seek to retain expert-specific parameters only in the layers that are most sensitive to the adversarial objectives, while sharing the remaining layers across experts.

To quantify the sensitivity of individual layers, we partition the trainable curve parameters into $M$ disjoint groups, $\boldsymbol{\omega}={\boldsymbol{\omega}^{(1)},\ldots,\boldsymbol{\omega}^{(M)}}$, where each group corresponds to a candidate MoE layer. For ViT, we treat the MLP module in each Transformer block as a candidate layer. Given an evaluation objective $q$, we measure the normalized gradient sensitivity of layer $m$ as
\begin{equation}
\label{eq:moe_layer_sensitivity}
C_m^{(q)}
=
\frac{1}{d_m}
\mathbb{E}_{(\boldsymbol{x},y)\sim\mathcal{D}}
\mathbb{E}_{t\sim\mathcal{U}(0,1)}
\left[
    \left\|
    \nabla_{\boldsymbol{\omega}^{(m)}}
    \widehat{\mathcal{J}}_{q}
    \left(
        \boldsymbol{\omega};
        t,\boldsymbol{x},y
    \right)
    \right\|_2^2
\right],
\nonumber
\end{equation}
where $d_m$ denotes the number of trainable curve parameters in layer $m$, and $\widehat{\mathcal{J}}_{q}$ denotes the mini-batch objective corresponding to clean or norm-specific adversarial evaluation. Division by $d_m$ removes the direct dependence on layer size, such that a larger $C_m^{(q)}$ indicates a stronger response of layer $m$ to objective $q$. For comparison across layers, we report the relative contribution by normalizing $C_m^{(q)}$ over all candidate layers for each objective.

Figure~\ref{fig:layer_sensitivity} shows the relative layer-wise contributions of the candidate MoE layers in ViT-Tiny. The contributions are highly non-uniform across Transformer blocks: a small number of modules exhibit substantially stronger responses, whereas several others contribute considerably less. Moreover, the clean and norm-specific objectives exhibit both shared and objective-dependent sensitivity patterns. For example, some blocks are consistently influential across different objectives, while others show noticeably different contributions between clean and adversarial objectives.

\begin{figure}[htbp]
\centering
\vspace{-5pt}
\includegraphics[width=0.98\columnwidth]{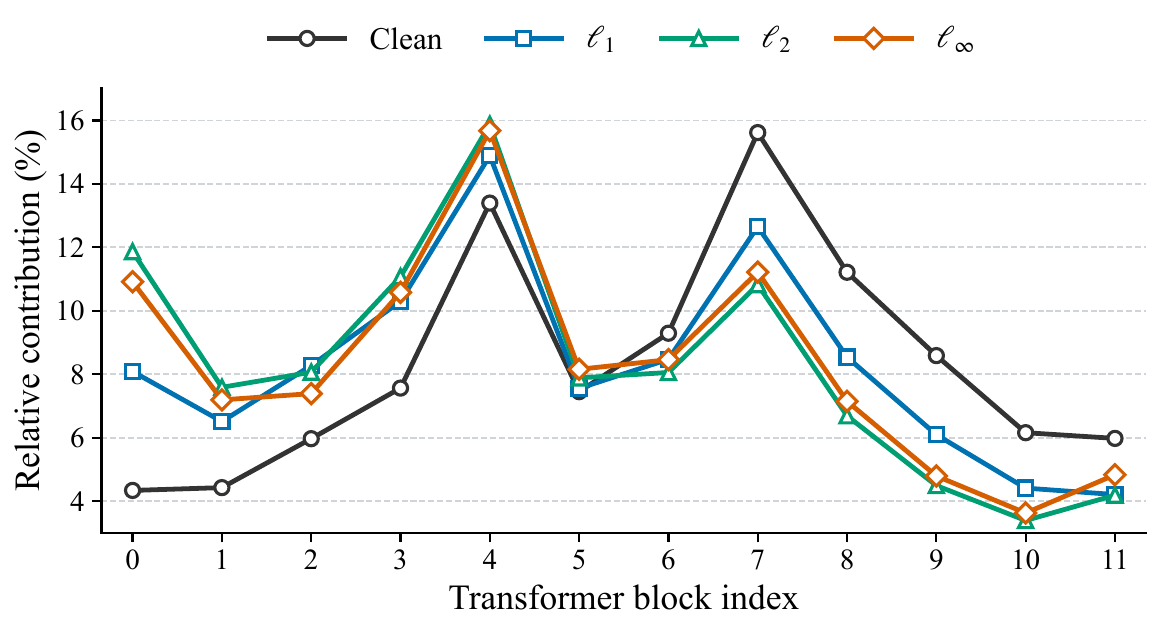}
\vspace{-5pt}
\caption{Relative contribution of the candidate MoE layers in ViT-Tiny to clean and norm-specific adversarial objectives. The layer-wise contributions are highly non-uniform, and different objectives exhibit both shared and objective-dependent sensitivity patterns.}
\label{fig:layer_sensitivity}
\end{figure}

These observations suggest that replicating every layer across experts is unnecessary. Instead, the additional expert capacity can be concentrated on the layers that contribute most strongly to adversarial robustness, while the remaining layers are shared across routing paths. Accordingly, since our objective is to improve multi-norm adversarial robustness, we select the MoE layers based only on their adversarial sensitivity. For each candidate layer, we average its sensitivity scores under the 
$\ell_1$, $\ell_2$, and $\ell_\infty$ objectives:
\begin{equation}
\label{eq:avg_adv_sensitivity}
C_m
=
\frac{1}{|\mathcal{P}|}
\sum_{p\in\mathcal{P}}
C_m^{(p)},
\qquad
\mathcal{P}=\{1,2,\infty\}.
\end{equation}
We then rank the candidate layers according to $C_m$ and select the top-$K$ 
layers for expertization:
\begin{equation}
\label{eq:moe_layer_selection}
\mathcal{S}_{\mathrm{MoE}}
=
\operatorname{TopK}
\left(
\{C_m\}_{m=1}^{M},
K
\right).
\end{equation}
After determining the MoE layers, we next select the curve models from which the expert-specific and shared parameters are extracted. Directly searching the entire curve for the model that maximizes each individual robustness objective may result in highly specialized but vulnerable experts. Prior work has shown that, in adversarial MoE architectures, the experts can constitute a more vulnerable component than the router~\cite{zhang2025optimizing}. This issue is particularly important in our setting because the norm-specialized endpoints may achieve strong robustness against their corresponding threat models while remaining considerably less robust to other perturbation norms. For example, under an $\ell_\infty$ attack, an adversary may perturb the routing decision toward an expert specialized primarily for $\ell_1$ robustness; if this expert has weak $\ell_\infty$ robustness, such misrouting can substantially degrade the final prediction. We therefore restrict both expert and shared-model selection to curve locations that maintain sufficiently strong robustness across all considered threat models.

Specifically, we uniformly discretize the trained curve into a set of sampled locations $\mathcal{T}$ and define the overall robust accuracy of a model at location $t$ as its worst accuracy among the three adversarial objectives:
\begin{equation}
\label{eq:overall_robust_accuracy}
A_{\mathrm{rob}}^{\mathrm{val}}(t)
=
\min_{p \in \mathcal{P}}
A_p^{\mathrm{val}}(t),
\qquad
\mathcal{P} = \{1,2,\infty\},
\end{equation}
where $A_p^{\mathrm{val}}(t)$ denotes the validation accuracy at curve location $t$ under the $\ell_p$ threat model. We then construct a robustness-constrained candidate set by retaining locations whose overall robust accuracy is within a tolerance $\tau_{\mathrm{cand}}$ of the best sampled model:
\begin{equation}
\label{eq:curve_candidate_set}
\mathcal{T}_{\mathrm{cand}}
=
\left\{
t \in \mathcal{T}
\;\middle|\;
A_{\mathrm{rob}}^{\mathrm{val}}(t)
\geq
\max_{t' \in \mathcal{T}}
A_{\mathrm{rob}}^{\mathrm{val}}(t')
-
\tau_{\mathrm{cand}}
\right\},
\end{equation}
where $\tau_{\mathrm{cand}}\geq0$ is a tolerance hyperparameter controlling the robustness range of the candidate set. A smaller $\tau_{\mathrm{cand}}$ imposes a stricter multi-norm robustness requirement, whereas a larger value admits a broader range of curve models with potentially stronger specialization. This constraint prevents models with severely imbalanced robustness profiles from being used as experts while retaining sufficiently robust curve locations for subsequent expert and shared-model selection.

Within $\mathcal{T}_{\mathrm{cand}}$, we select four specialist models according to their clean and norm-specific validation performance:
\begin{equation}
\label{eq:expert_location_selection}
t_q^{*}
=
\arg\max_{t \in \mathcal{T}_{\mathrm{cand}}}
A_q^{\mathrm{val}}(t),
\qquad
q \in
\left\{
\mathrm{std},
\ell_1,
\ell_2,
\ell_\infty
\right\}.
\end{equation}
Although our primary objective is multi-norm adversarial robustness, we additionally retain the clean-accuracy specialist to exploit the conditional capacity of the MoE architecture and improve standard performance without discarding the norm-specific robust experts. For every selected MoE layer $m\in\mathcal{S}_{\mathrm{MoE}}$, the corresponding parameters from these four specialist models are instantiated as four expert branches.

The remaining layers are shared by all routing paths and are therefore initialized from a common base model. Since all models in $\mathcal{T}_{\mathrm{cand}}$ already satisfy the multi-norm robustness constraint, the shared base model is selected according to its parameter-space compatibility with the specialist models. Let $\boldsymbol{\theta}(t)$ denote the complete parameter vector of the model at curve location $t$. We measure the parameter-space compatibility between a candidate base model and the selected specialist models using their average normalized distance:
\begin{equation}
\label{eq:base_parameter_distance}
D_{\mathrm{base}}(t)
=
\frac{1}{|\mathcal{Q}|}
\sum_{q \in \mathcal{Q}}
\frac{\left\|\boldsymbol{\theta}(t)-\boldsymbol{\theta}(t_q^{*})\right\|_2^2}{\left\|\boldsymbol{\theta}(t_q^{*})\right\|_2^2+\varepsilon},
\end{equation}
where $\mathcal{Q}=\{\mathrm{std},\ell_1,\ell_2,\ell_\infty\}$ and $\varepsilon>0$ is a small constant for numerical stability. A smaller $D_{\mathrm{base}}(t)$ indicates that the candidate model is more compatible with the specialist models in the overall parameter space. The shared base model is then selected from the robustness-constrained candidate set as
\begin{equation}
\label{eq:shared_base_selection}
t_{\mathrm{base}}^{*}
=
\arg\min_{t \in \mathcal{T}_{\mathrm{cand}}}D_{\mathrm{base}}(t).
\end{equation}
The layers outside $\mathcal{S}_{\mathrm{MoE}}$ are inherited from the model at $t_{\mathrm{base}}^{*}$ and shared across all routing paths. In this way, the resulting architecture combines robustness-constrained specialist experts with a parameter-compatible shared backbone, while introducing expert-specific parameters only at the robustness-sensitive layers.

\paragraph{Contribution-guided partial curve updating}
Constructing the robust connectivity curve remains computationally expensive because, after training the two norm-specialized endpoints, the curve parameters must be further optimized using multi-norm adversarial training. Motivated by the highly non-uniform layer-wise sensitivity observed above, we investigate whether a comparable robust curve can be obtained by updating only a subset of the trainable curve parameters. Specifically, starting from the initial curve parameter $\boldsymbol{\omega}_0$, we optimize only a selected subset of parameter groups while keeping the remaining groups fixed at their initial values throughout curve training.

We first examine random partial updating, where a prescribed fraction of parameter groups is randomly selected for optimization. As shown in Table~\ref{tab:random_partial_update}, partial updating consistently reduces curve-training time, but random parameter selection leads to noticeable performance degradation and increased variation, especially at lower updating ratios. For example, when the updating ratio is reduced to $30\%$, the degradation becomes more pronounced across clean and adversarial objectives, with particularly large drops under the $\ell_2$ and $\ell_\infty$ threat models. These results indicate that different parameter groups contribute unequally to robust curve optimization, and randomly excluding influential groups can substantially impair both the effectiveness and stability of the resulting curve. Therefore, an effective partial-updating strategy should identify the parameter groups that are most important to the multi-norm robust objective before curve optimization begins.

\begin{table}[htbp]
\centering
\caption{Efficiency and robustness of curve updating strategies.}
\label{tab:random_partial_update}
\resizebox{\columnwidth}{!}{
\begin{tabular}{lcccccc}
\toprule
Updating Strategy & Ratio & Time Reduction (\%) & Clean (\%) & $\ell_1$ (\%) & $\ell_2$ (\%) & $\ell_\infty$ (\%) \\
\midrule
Full Updating       & 100\% & $0$              & $57.52 \pm 2.99$ & $18.78 \pm 7.35$ & $35.29 \pm 2.12$ & $16.78 \pm 3.74$ \\
Random              & 50\%  & $35.26 \pm 0.57$  & $56.32 \pm 4.97$ & $16.85 \pm 9.57$ & $32.83 \pm 4.85$ & $15.91 \pm 4.21$ \\
Random              & 30\%  & $36.56 \pm 0.31$  & $52.95 \pm 7.92$ & $17.93 \pm 8.49$ & $30.75 \pm 6.78$ & $12.39 \pm 7.43$ \\
Contribution-Guided & 50\%  & $35.14 \pm 0.70$  & $57.13 \pm 3.41$ & $18.54 \pm 7.45$ & $34.98 \pm 2.20$ & $16.58 \pm 3.86$ \\
Contribution-Guided & 30\%  & $36.05 \pm 0.82$  & $56.82 \pm 3.73$ & $18.33 \pm 7.56$ & $34.67 \pm 2.36$ & $16.41 \pm 3.96$ \\
\bottomrule
\end{tabular}
}
\end{table}

To characterize the effect of restricting curve optimization to a subset of parameters, let $\boldsymbol{\omega}_0$ denote the initialization of the trainable curve parameters and let
\begin{equation}
\label{eq:full_curve_solution}
\boldsymbol{\omega}^{*}
=
\boldsymbol{\omega}_0
+
\boldsymbol{\Delta}^{*},
\end{equation}
where $\boldsymbol{\omega}^{*}$ denotes a stationary solution obtained through full curve optimization and $\boldsymbol{\Delta}^{*}$ represents the corresponding optimization displacement. For a subset of parameters $\mathcal{S}$, let $\boldsymbol{P}_{\mathcal{S}}$ denote the projection onto the parameter subspace specified by $\mathcal{S}$. The feasible set of the corresponding partial-update problem is
\begin{equation}
\label{eq:restricted_feasible_set}
\mathcal{F}_{\mathcal{S}}
=
\left\{
\boldsymbol{\omega}_0
+
\boldsymbol{P}_{\mathcal{S}}\boldsymbol{d}
:
\boldsymbol{d}\in\mathbb{R}^{d}
\right\},
\end{equation}
and the restricted solution is defined as
\begin{equation}
\label{eq:restricted_curve_solution}
\boldsymbol{\omega}_{\mathcal{S}}^{*}
\in
\arg\min_{\boldsymbol{\omega}\in\mathcal{F}_{\mathcal{S}}}
\mathcal{J}(\boldsymbol{\omega}),
\end{equation}
where $\mathcal{J}$ denotes the curve objective.

\begin{theorem}[Robust-objective preservation under restricted parameters]
\label{thm:partial_update}
Suppose that $\mathcal{J}$ is $L$-smooth in a neighborhood containing $\boldsymbol{\omega}^{*}$ and the restricted solutions, and that $\boldsymbol{\omega}^{*}$ is a stationary point of the full curve optimization problem. Then,
\begin{equation}
\label{eq:partial_update_bound}
\mathcal{J}\left(\boldsymbol{\omega}_{\mathcal{S}}^{*}\right)
-
\mathcal{J}\left(\boldsymbol{\omega}^{*}\right)
\leq
\frac{L}{2}
\left\|
\boldsymbol{P}_{\mathcal{S}}^{\perp}
\boldsymbol{\Delta}^{*}
\right\|_2^2,
\end{equation}
where $\boldsymbol{P}_{\mathcal{S}}^{\perp}
=
\boldsymbol{I}
-
\boldsymbol{P}_{\mathcal{S}}$.
\end{theorem}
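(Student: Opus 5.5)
The argument compares the restricted optimum with one explicitly chosen feasible point of $\mathcal{F}_{\mathcal{S}}$. It then uses $L$-smoothness around the stationary point $\boldsymbol{\omega}^{*}$ to control that point's objective gap. The natural candidate is the projection of the full displacement onto the trainable subspace,
\begin{equation*}
\tilde{\boldsymbol{\omega}}=\boldsymbol{\omega}_0+\boldsymbol{P}_{\mathcal{S}}\boldsymbol{\Delta}^{*}.
\end{equation*}
This point lies in $\mathcal{F}_{\mathcal{S}}$ by taking $\boldsymbol{d}=\boldsymbol{\Delta}^{*}$. Since $\boldsymbol{\omega}_{\mathcal{S}}^{*}$ minimizes $\mathcal{J}$ over $\mathcal{F}_{\mathcal{S}}$, we have $\mathcal{J}(\boldsymbol{\omega}_{\mathcal{S}}^{*})\leq\mathcal{J}(\tilde{\boldsymbol{\omega}})$. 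It therefore suffices to bound $\mathcal{J}(\tilde{\boldsymbol{\omega}})-\mathcal{J}(\boldsymbol{\omega}^{*})$.

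Next, write $\tilde{\boldsymbol{\omega}}-\boldsymbol{\omega}^{*}=\boldsymbol{P}_{\mathcal{S}}\boldsymbol{\Delta}^{*}-\boldsymbol{\Delta}^{*}=-\boldsymbol{P}_{\mathcal{S}}^{\perp}\boldsymbol{\Delta}^{*}$. Apply the descent lemma implied by $L$-smoothness:
\begin{equation*}
\mathcal{J}(\tilde{\boldsymbol{\omega}})\leq\mathcal{J}(\boldsymbol{\omega}^{*})+\nabla\mathcal{J}(\boldsymbol{\omega}^{*})^{\top}(\tilde{\boldsymbol{\omega}}-\boldsymbol{\omega}^{*})+\frac{L}{2}\|\tilde{\boldsymbol{\omega}}-\boldsymbol{\omega}^{*}\|_2^2 .
\end{equation*}
Stationarity gives $\nabla\mathcal{J}(\boldsymbol{\omega}^{*})=\boldsymbol{0}$, so the linear term vanishes. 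What remains is $\frac{L}{2}\|\boldsymbol{P}_{\mathcal{S}}^{\perp}\boldsymbol{\Delta}^{*}\|_2^2$. Chaining this with the minimality inequality yields \eqref{eq:partial_update_bound}.

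The only delicate point is that the descent lemma needs $\nabla\mathcal{J}$ to be $L$-Lipschitz on the whole segment from $\boldsymbol{\omega}^{*}$ to $\tilde{\boldsymbol{\omega}}$, not just at its endpoints. The hypothesis as stated covers a neighborhood containing $\boldsymbol{\omega}^{*}$ and the restricted solutions, but not obviously $\tilde{\boldsymbol{\omega}}$ itself. I would handle this by interpreting the smoothness neighborhood as a convex region containing $\boldsymbol{\omega}^{*}$ and the comparison point $\tilde{\boldsymbol{\omega}}$, and I would state that reading explicitly. That set is convex, so it contains the segment. If one wants to stay closer to the literal hypothesis, a fallback is to assume global $L$-smoothness of $\mathcal{J}$. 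I will also note that the projection is orthogonal and coordinate-wise, so $\boldsymbol{P}_{\mathcal{S}}^{\perp}$ is idempotent, though the argument never needs this beyond the decomposition $\boldsymbol{\Delta}^{*}=\boldsymbol{P}_{\mathcal{S}}\boldsymbol{\Delta}^{*}+\boldsymbol{P}_{\mathcal{S}}^{\perp}\boldsymbol{\Delta}^{*}$.
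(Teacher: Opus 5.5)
Your proof is correct and essentially identical to the paper's. You use the same comparison point $\boldsymbol{\omega}_0+\boldsymbol{P}_{\mathcal{S}}\boldsymbol{\Delta}^{*}$, show it is feasible via $\boldsymbol{d}=\boldsymbol{\Delta}^{*}$, apply the descent lemma at the stationary point $\boldsymbol{\omega}^{*}$, and chain the result with the minimality of $\boldsymbol{\omega}_{\mathcal{S}}^{*}$. Your caveat is also well taken: the paper invokes $L$-smoothness at this comparison point without noting that the stated neighborhood (around $\boldsymbol{\omega}^{*}$ and the restricted solutions) need not contain it or the segment to it. Reading the neighborhood explicitly as a convex set containing both points is exactly the assumption the argument actually needs.
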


The proof is provided in Appendix~\ref{app:partial_update_proof}. Theorem~\ref{thm:partial_update} bounds the objective gap between full curve updating and partial curve updating by $\frac{L}{2}\left\|\boldsymbol{P}_{\mathcal{S}}^{\perp}\boldsymbol{\Delta}^{*}\right\|_2^2$. Since $L$ is fixed for the considered objective, making the partially updated curve perform as closely as possible to the fully updated curve requires minimizing the excluded optimization displacement $\left\|\boldsymbol{P}_{\mathcal{S}}^{\perp}\boldsymbol{\Delta}^{*}\right\|_2^2$. Therefore, the selected parameter subset $\mathcal{S}$ should retain the parameters that undergo the largest changes during full optimization. However, the corresponding displacement $\boldsymbol{\Delta}^{*}$ is only available after completing full curve training, which defeats the purpose of partial updating.

To obtain a selection criterion before curve optimization begins, we approximate the importance of each parameter using its local gradient at the initial curve configuration $\boldsymbol{\omega}_0$. For parameter $\omega_m$, we define the initialization-based contribution score as
\begin{equation}
\label{eq:partial_update_contribution}
G_m
=
\mathbb{E}_{(\boldsymbol{x},y)\sim\mathcal{D}}
\mathbb{E}_{t\sim\mathcal{U}(0,1)}
\left[
\left(
\frac{\partial
\widehat{\mathcal{J}}_{\mathrm{MSD}}
\left(
\boldsymbol{\omega}_0;
t,\boldsymbol{x},y
\right)}
{\partial \omega_m}
\right)^2
\right].
\nonumber
\end{equation}
Here, $\widehat{\mathcal{J}}_{\mathrm{MSD}}$ denotes the mini-batch multi-norm adversarial objective approximated using MSD. Since gradient-based optimization initially changes each parameter according to its local gradient at $\boldsymbol{\omega}_0$, a larger $G_m$ indicates that parameter $\omega_m$ is more strongly driven to adapt under the multi-norm robust objective. We therefore use $G_m$ as a practical proxy for the parameter displacement characterized in Theorem~\ref{thm:partial_update}.

Given an updating ratio $\rho\in(0,1]$, we rank all trainable curve parameters according to their contribution scores and select the top $\rho$ fraction for optimization:
\begin{equation}
\label{eq:partial_update_selection}
\mathcal{S}_{\mathrm{upd}}
=
\operatorname{TopK}
\left(
\{G_m\}_{m=1}^{M},
\left\lceil \rho M \right\rceil
\right),
\end{equation}
where $M$ denotes the total number of trainable curve parameters. Starting from $\boldsymbol{\omega}_0$, only the parameters indexed by $\mathcal{S}_{\mathrm{upd}}$ are optimized during subsequent robust curve training, while all remaining parameters are kept fixed at their initial values. In this way, as demonstrated in Table~\ref{tab:random_partial_update}, contribution-guided partial updating concentrates the optimization budget on the parameters predicted to be most influential to the curve objective, reducing the cost of curve construction while avoiding the instability introduced by random parameter selection.

\paragraph{Robust MoE fine-tuning}
After selective expertization, the resulting MoE combines shared parameters from the base model with expert-specific parameters extracted from different specialist locations along the curve. Although each component originates from a well-performing curve model, their recombination does not correspond to any model directly optimized along the original curve and may therefore introduce representation mismatch between the shared backbone and the inserted experts. Moreover, the newly introduced router must adapt to the heterogeneous robustness profiles of the curve-derived experts. We therefore perform a joint robust fine-tuning stage to adapt the constructed MoE as a unified model.

Specifically, we fine-tune only the expert parameters in the selected MoE layers together with the router, while keeping all shared parameters fixed. The optimization objective is a weighted combination of the standard loss and the multi-norm adversarial loss:
\begin{equation}
\label{eq:moe_finetune_objective}
\min_{\boldsymbol{\Theta}_{\mathrm{exp}},\boldsymbol{\Phi}}
\;
(1-\alpha)\mathcal{L}_{\mathrm{std}}
+
\alpha\mathcal{L}_{\mathrm{rob}},
\end{equation}
where $\boldsymbol{\Theta}_{\mathrm{exp}}$ denotes the parameters of the curve-derived experts, $\boldsymbol{\Phi}=\{\boldsymbol{\phi}^{(m)}\}_{m\in\mathcal{S}_{\mathrm{MoE}}}$ denotes the collection of router parameters associated with the selected MoE layers and $\alpha\in[0,1]$ controls the trade-off between standard accuracy and multi-norm adversarial robustness. Here, $\mathcal{L}_{\mathrm{std}}$ denotes the classification loss on clean inputs, while $\mathcal{L}_{\mathrm{rob}}$ denotes the MSD-approximated worst-case loss over $\mathcal{P}=\{1,2,\infty\}$. Unlike commonly used MoE training practices, we do not employ a separate router warm-up stage or an auxiliary load-balancing loss, as neither provides additional performance gains in our experiments; detailed comparisons are provided in the experimental section. This single-stage fine-tuning adapts the curve-derived experts and router to the shared backbone while preserving the parameters of the selected base model. The complete training and construction procedure of Robust CurveMoE is summarized in Algorithm~\ref{alg:robust_curvemoe}.

\begin{algorithm}[htbp]
\caption{Robust CurveMoE Training and Construction}
\label{alg:robust_curvemoe}
\begin{algorithmic}[1]
\REQUIRE Training data $\mathcal{D}$; threat models $\mathcal{P}=\{1,2,\infty\}$; evaluation objectives $\mathcal{Q}=\{\mathrm{std},\ell_1,\ell_2,\ell_\infty\}$; updating ratio $\rho$; number of MoE layers $K$; candidate tolerance $\tau_{\mathrm{cand}}$; fine-tuning weight $\alpha$
\ENSURE Robust CurveMoE model

\STATE Train $\ell_1$- and $\ell_\infty$-robust endpoints $\boldsymbol{\theta}_1$ and $\boldsymbol{\theta}_2$.
\STATE Initialize the curve parameter $\boldsymbol{\omega}_0$.

\FOR{each trainable curve parameter $\omega_m$}
\STATE $G_m \leftarrow
\mathbb{E}_{(\boldsymbol{x},y)\sim\mathcal{D}}
\mathbb{E}_{t\sim\mathcal{U}(0,1)}
\left[
\left(
\frac{\partial \widehat{\mathcal{J}}_{\mathrm{MSD}}(\boldsymbol{\omega}_0;t,\boldsymbol{x},y)}
{\partial \omega_m}
\right)^2
\right]$.
\ENDFOR

\STATE $\mathcal{S}_{\mathrm{upd}}
\leftarrow
\operatorname{TopK}
\left(
\{G_m\}_{m=1}^{M},
\left\lceil \rho M \right\rceil
\right)$.
\STATE Optimize only the parameters indexed by $\mathcal{S}_{\mathrm{upd}}$ under the MSD-based multi-norm curve objective; keep all remaining curve parameters fixed.

\STATE Uniformly sample curve locations $\mathcal{T}\subset[0,1]$.
\FOR{each $t\in\mathcal{T}$}
\STATE Evaluate $A_q^{\mathrm{val}}(t)$ for all $q\in\mathcal{Q}$.
\STATE $A_{\mathrm{rob}}^{\mathrm{val}}(t)
\leftarrow
\min_{p\in\mathcal{P}}A_p^{\mathrm{val}}(t)$.
\ENDFOR

\STATE $\mathcal{T}_{\mathrm{cand}}
\leftarrow
\left\{
t\in\mathcal{T}:
A_{\mathrm{rob}}^{\mathrm{val}}(t)
\geq
\max_{t'\in\mathcal{T}}
A_{\mathrm{rob}}^{\mathrm{val}}(t')
-
\tau_{\mathrm{cand}}
\right\}$.

\FOR{each $q\in\mathcal{Q}$}
\STATE $t_q^{*}
\leftarrow
\arg\max_{t\in\mathcal{T}_{\mathrm{cand}}}
A_q^{\mathrm{val}}(t)$.
\ENDFOR

\FOR{each candidate MoE layer $m$}
\FOR{each $q\in\mathcal{Q}$}
\STATE Compute the trained-curve sensitivity $C_m^{(q)}$.
\ENDFOR
\STATE $C_m
\leftarrow
\frac{1}{3}
\left(
C_m^{(\ell_1)}
+
C_m^{(\ell_2)}
+
C_m^{(\ell_\infty)}
\right)$.
\ENDFOR

\STATE $\mathcal{S}_{\mathrm{MoE}}
\leftarrow
\operatorname{TopK}
\left(
\{C_m\},
K
\right)$.

\FOR{each $t\in\mathcal{T}_{\mathrm{cand}}$}
\STATE $D_{\mathrm{base}}(t)
\leftarrow
\frac{1}{|\mathcal{Q}|}
\sum_{q\in\mathcal{Q}}
\frac{
\left\|
\boldsymbol{\theta}(t)
-
\boldsymbol{\theta}(t_q^{*})
\right\|_2^2
}{
\left\|
\boldsymbol{\theta}(t_q^{*})
\right\|_2^2
+
\varepsilon
}$.
\ENDFOR

\STATE $t_{\mathrm{base}}^{*}
\leftarrow
\arg\min_{t\in\mathcal{T}_{\mathrm{cand}}}
D_{\mathrm{base}}(t)$.

\STATE Construct the shared backbone from $\boldsymbol{\theta}(t_{\mathrm{base}}^{*})$.
\FOR{each $m\in\mathcal{S}_{\mathrm{MoE}}$}
\STATE Instantiate four experts from
$\{\boldsymbol{\theta}^{(m)}(t_q^{*})\}_{q\in\mathcal{Q}}$.
\STATE Initialize an independent router $\boldsymbol{\phi}^{(m)}$.
\ENDFOR

\STATE $\boldsymbol{\Phi}
\leftarrow
\{\boldsymbol{\phi}^{(m)}\}_{m\in\mathcal{S}_{\mathrm{MoE}}}$.
\STATE Freeze all shared parameters.
\STATE $(\boldsymbol{\Theta}_{\mathrm{exp}}^{*},\boldsymbol{\Phi}^{*})
\leftarrow
\arg\min_{\boldsymbol{\Theta}_{\mathrm{exp}},\boldsymbol{\Phi}}
\left[
(1-\alpha)\mathcal{L}_{\mathrm{std}}
+
\alpha\mathcal{L}_{\mathrm{rob}}
\right]$.

\RETURN Fine-tuned Robust CurveMoE.
\end{algorithmic}
\end{algorithm}

\section{Experimental Evaluation}

\subsection{Experimental Setup}
\label{sec:experimental_setup}

\paragraph{Datasets and architectures}
We conduct experiments on CIFAR-100~\cite{krizhevsky2009learning} and ImageNet-100~\cite{deng2009imagenet} using WideResNet-28-10~\cite{zagoruyko2016wide} and ViT-Tiny/16~\cite{touvron2021training}, respectively. These two settings cover both convolutional and Transformer-based architectures and are used to evaluate the generality of Robust CurveMoE across model families and datasets.

\paragraph{Threat models and evaluation}
We consider $\ell_1$, $\ell_2$, and $\ell_\infty$ adversarial perturbations throughout the experiments. For CIFAR-100, the perturbation budgets are $\epsilon_1=12$, $\epsilon_2=0.5$, and $\epsilon_\infty=8/255$. For ImageNet-100, we use $\epsilon_1=75$, $\epsilon_2=2.0$, and $\epsilon_\infty=4/255$. Robust accuracy is evaluated separately under the three threat models using AutoAttack~\cite{croce2020reliable}, and standard accuracy is reported on clean test samples.

\paragraph{Training settings}
For each dataset, we first train an $\ell_1$-robust endpoint and an $\ell_\infty$-robust endpoint, followed by robust curve training and MoE fine-tuning. On CIFAR-100, the WideResNet-28-10 endpoints are trained for 150 epochs, the robust connectivity curve for 50 epochs, and the resulting MoE for 50 epochs. On ImageNet-100, the pre-trained ViT-Tiny/16 endpoints are trained for 30 epochs, followed by 20 epochs of curve training and 20 epochs of MoE fine-tuning.

\paragraph{Baselines and hyperparameters}
We compare Robust CurveMoE with MSD~\cite{maini2020adversarial} and ERMC~\cite{wang2023robust}. Unless otherwise specified, we use a partial curve updating ratio of $\rho=30\%$, select $K=3$ MoE layers, set the candidate tolerance to $\tau_{\mathrm{cand}}=2$ percentage points, and use $\alpha=0.6$ to balance the standard and multi-norm adversarial objectives.

\subsection{Overall Multi-Norm Robustness}

Table~\ref{tab:overall_robustness} compares Robust CurveMoE with ERMC and MSD on CIFAR-100 and ImageNet-100. In addition to clean and norm-specific adversarial accuracy, we report \emph{Union accuracy}, which measures the fraction of samples that remain correctly classified under all three considered threat models, i.e., $\ell_1$, $\ell_2$, and $\ell_\infty$. This metric therefore provides a stricter measure of multi-norm robustness by requiring each sample to withstand the union of all perturbation sets.

\begin{table}[htbp]
\centering
\vspace{-5pt}
\caption{Overall clean and multi-norm adversarial robustness.}
\label{tab:overall_robustness}
\resizebox{\columnwidth}{!}{
\begin{tabular}{llccccc}
\toprule
Dataset & Method & Clean (\%) & $\ell_1$ (\%) & $\ell_2$ (\%) & $\ell_\infty$ (\%) & Union (\%) \\
\midrule
\multirow{3}{*}{CIFAR-100}
& ERMC
& $55.65 \pm 0.31$ & $17.42 \pm 0.95$ & $35.90 \pm 0.34$
& $18.32 \pm 0.43$ & $15.86 \pm 0.38$ \\
& MSD
& $54.72 \pm 0.76$ & $24.85 \pm 0.79$ & $33.16 \pm 0.19$
& $17.63 \pm 0.49$ & $15.21 \pm 0.45$ \\
& Robust CurveMoE
& $58.34 \pm 0.64$ & $28.40 \pm 0.70$ & $36.48 \pm 0.75$
& $19.71 \pm 0.27$ & $18.23 \pm 0.67$ \\
\midrule
\multirow{3}{*}{ImageNet-100}
& ERMC
& $67.22 \pm 0.65$ & $42.18 \pm 0.16$ & $50.68 \pm 0.12$
& $40.46 \pm 0.50$ & $37.52 \pm 0.96$ \\
& MSD
& $69.90 \pm 0.34$ & $62.10 \pm 0.58$ & $49.80 \pm 0.23$
& $38.20 \pm 0.75$ & $35.84 \pm 0.26$ \\
& Robust CurveMoE
& $73.58 \pm 0.50$ & $67.62 \pm 0.69$ & $54.34 \pm 0.89$
& $41.96 \pm 0.95$ & $39.65 \pm 0.54$ \\
\bottomrule
\end{tabular}
}
\vspace{-5pt}
\end{table}

Robust CurveMoE consistently achieves the best performance across all evaluation metrics on both datasets. On CIFAR-100, it improves Union accuracy from $15.86\%$ for ERMC and $15.21\%$ for MSD to $18.23\%$, while simultaneously achieving higher clean and norm-specific robustness. In particular, its $\ell_1$ accuracy reaches $28.40\%$, substantially exceeding both baselines, without sacrificing $\ell_2$ or $\ell_\infty$ robustness. Similar improvements are observed on ImageNet-100, where Robust CurveMoE achieves a Union accuracy of $39.65\%$, compared with $37.52\%$ for ERMC and $35.84\%$ for MSD. It also improves clean accuracy to $73.58\%$ and achieves the highest robustness under all three perturbation norms. These results demonstrate that exploiting complementary curve-derived experts through input-dependent routing provides a more favorable balance between standard accuracy and multi-norm robustness.

\subsection{Robustness of Expert Selection}

The candidate tolerance $\tau_{\mathrm{cand}}$ controls how aggressively the expert pool is restricted according to cross-norm robustness. A larger tolerance admits more specialized curve models, but may also introduce experts that are highly vulnerable to perturbation norms different from their specialization. This issue is particularly relevant for adversarial MoE models because an attacker may jointly manipulate both the routing decision and the prediction of the selected expert. We therefore evaluate the robustness of expert selection under a targeted expert attack designed to expose this failure mode.

Specifically, we consider an $\ell_\infty$-bounded attack that targets the $\ell_1$-specialized expert. The adversary simultaneously encourages the router to select the $\ell_1$ expert and causes the selected expert to misclassify the input. We therefore define the attack objective as
\begin{equation}
\label{eq:target_expert_attack}
\mathcal{L}_{\mathrm{target}}
=
\mathcal{L}_{\mathrm{cls}}
+
\lambda_{\mathrm{rt}}
\mathcal{L}_{\mathrm{rt}}.
\end{equation}
Here, $\mathcal{L}_{\mathrm{cls}}$ increases the classification loss of the targeted $\ell_1$ expert, $\mathcal{L}_{\mathrm{rt}}$ encourages the router to select that expert, and $\lambda_{\mathrm{rt}}$ controls the relative strength of the routing objective. We set $\lambda_{\mathrm{rt}} = 0.8$ in this evaluation.

\begin{table}[htbp]
\centering
\vspace{-5pt}
\caption{Effect of candidate tolerance on expert robustness.}
\label{tab:target_expert_attack}
\resizebox{\columnwidth}{!}{
\begin{tabular}{lcccccc}
\toprule
Dataset & $\tau_{\mathrm{cand}}$ & Clean (\%) & $\ell_1$ (\%) & $\ell_2$ (\%) & $\ell_\infty$ (\%) & Targeted Expert (\%) \\
\midrule
\multirow{3}{*}{CIFAR-100}
& $2$  & $58.34 \pm 0.64$ & $28.40 \pm 0.70$ & $36.48 \pm 0.75$ & $19.71 \pm 0.27$ & $14.58 \pm 0.15$ \\
& $5$  & $61.28 \pm 0.33$ & $31.15 \pm 0.31$ & $36.23 \pm 0.65$ & $21.45 \pm 0.83$ & $9.96 \pm 0.78$ \\
& $10$ & $62.73 \pm 0.62$ & $31.65 \pm 0.61$ & $36.68 \pm 0.74$ & $22.29 \pm 0.91$ & $9.74 \pm 0.82$ \\
\midrule
\multirow{3}{*}{ImageNet-100}
& $2$  & $73.58 \pm 0.50$ & $67.62 \pm 0.69$ & $54.34 \pm 0.89$ & $41.96 \pm 0.95$ & $38.18 \pm 0.19$ \\
& $5$  & $73.86 \pm 0.14$ & $68.14 \pm 0.24$ & $54.53 \pm 0.97$ & $42.85 \pm 0.75$ & $31.79 \pm 0.56$ \\
& $10$ & $73.84 \pm 0.25$ & $69.29 \pm 0.35$ & $54.91 \pm 0.72$ & $47.53 \pm 0.38$ & $15.67 \pm 0.59$ \\
\bottomrule
\end{tabular}
}
\vspace{-5pt}
\end{table}

As shown in Table~\ref{tab:target_expert_attack}, increasing $\tau_{\mathrm{cand}}$ generally preserves or improves clean and norm-specific adversarial accuracy, but substantially weakens robustness against the targeted expert attack. On CIFAR-100, the targeted-expert accuracy decreases from $14.58\%$ at $\tau_{\mathrm{cand}}=2$ to $9.74\%$ at $\tau_{\mathrm{cand}}=10$, despite the improvement in clean and individual-norm robustness. A similar but more pronounced trend is observed on ImageNet-100, where the targeted-expert accuracy drops from $38.18\%$ to $15.67\%$ as $\tau_{\mathrm{cand}}$ increases from $2$ to $10$. This indicates that a looser candidate constraint can admit more specialized but cross-norm-vulnerable experts, which may not be exposed by standard norm-specific evaluation but can be exploited through joint routing and classification attacks. These results validate the role of the candidate constraint in improving expert-level robustness and support the use of $\tau_{\mathrm{cand}}=2$.

\subsection{Ablation Study}

\paragraph{Effect of partial updating ratio}
We first investigate the effect of the partial curve updating ratio $\rho$ by comparing contribution-guided updating with $\rho\in\{30\%,50\%,100\%\}$. Figure~\ref{fig:partial_updating_ratio} reports the clean and norm-specific adversarial accuracy along the resulting connectivity curves. Across both WideResNet-28-10 on CIFAR-100 and ViT-Tiny on ImageNet-100, reducing the updating ratio has only a minor effect on the curve performance. In particular, the curves obtained with $30\%$ and $50\%$ updating closely follow those of full updating across clean, $\ell_1$, $\ell_2$, and $\ell_\infty$ evaluations. The agreement is especially strong on ImageNet-100, while only small deviations are observed at some intermediate curve locations on CIFAR-100. More importantly, partial updating preserves the characteristic robustness specialization along the curve, including stronger $\ell_1$ robustness near one endpoint and stronger $\ell_\infty$ robustness toward the other. Together with the training-time reduction reported in Table~\ref{tab:random_partial_update}, these results show that contribution-guided updating can substantially reduce curve optimization cost without materially altering the robustness characteristics of the learned connectivity path. We therefore use $\rho=30\%$ as the default setting in our experiments.

\begin{figure}[htbp]
    \centering
    \vspace{-5pt}
    \includegraphics[width=0.98\columnwidth]{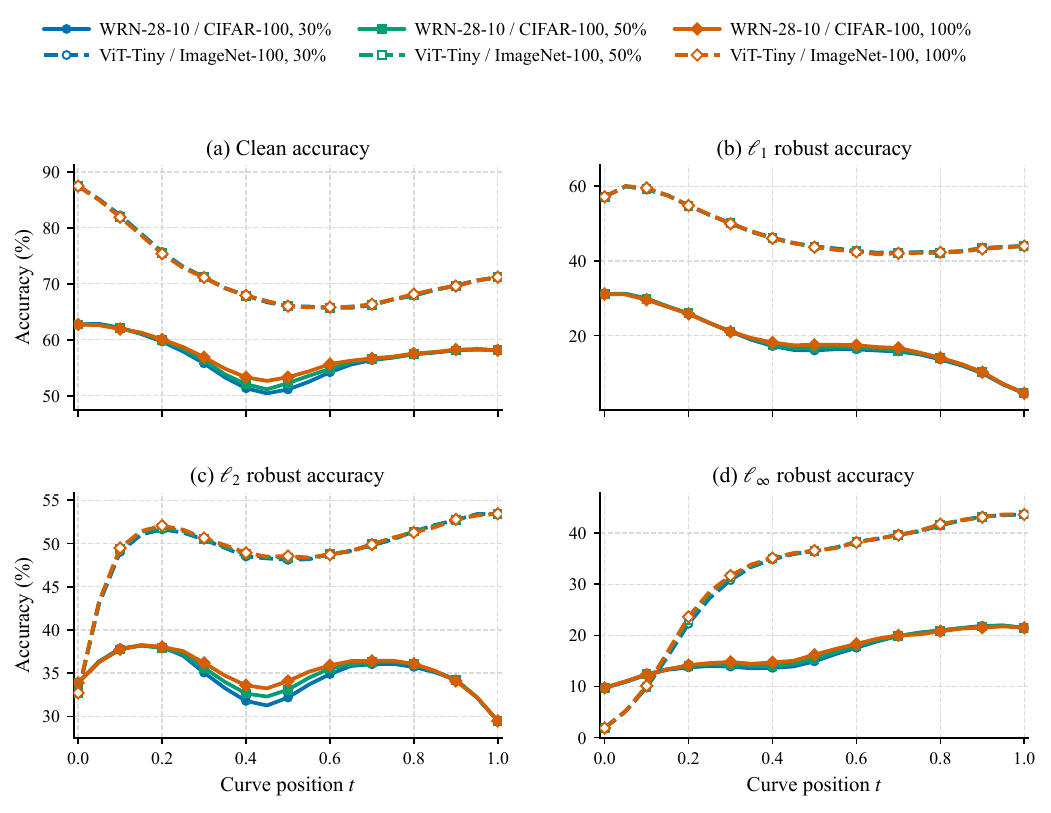}
    \caption{Effect of the partial updating ratio on clean and norm-specific adversarial accuracy along the robust connectivity curve.}
    \label{fig:partial_updating_ratio}
    \vspace{-5pt}
\end{figure}

\paragraph{Effect of the number of MoE layers}
As shown in Table~\ref{tab:moe_layer_number}, increasing the number of MoE layers from $K=3$ to $K=5$ consistently improves both clean and multi-norm adversarial performance on the two datasets. On CIFAR-100, the Union accuracy increases from $18.23\%$ to $19.45\%$, while clean, $\ell_1$, $\ell_2$, and $\ell_\infty$ accuracy also improve. A similar trend is observed on ImageNet-100, where the Union accuracy increases from $39.65\%$ to $42.93\%$. However, further increasing $K$ to $7$ yields only marginal changes and slightly reduces the Union accuracy on both datasets. This indicates that introducing additional high-contribution MoE layers initially provides greater expert-specific capacity, whereas expertizing more layers beyond this point offers diminishing returns. Although increasing $K$ generally improves robustness by introducing more expert-specific capacity, it also increases the training and parameter overhead of the MoE architecture. We therefore use $K=3$ as the default setting to achieve a better trade-off between multi-norm robustness and efficiency.

\begin{table}[htbp]
\centering
\vspace{-5pt}
\caption{Effect of the number of MoE layers on model performance and training cost.}
\label{tab:moe_layer_number}
\resizebox{\columnwidth}{!}{
\begin{tabular}{lccccccc}
\toprule
Dataset & $K$ & Training Time (h) & Clean (\%) & $\ell_1$ (\%) & $\ell_2$ (\%) & $\ell_\infty$ (\%) & Union (\%) \\
\midrule
\multirow{3}{*}{CIFAR-100}
& $3$ & $14.60 \pm 0.90$ & $58.34 \pm 0.64$ & $28.40 \pm 0.70$ & $36.48 \pm 0.75$ & $19.71 \pm 0.27$ & $18.23 \pm 0.67$ \\
& $5$ & $20.97 \pm 0.27$ & $62.10 \pm 0.96$ & $31.98 \pm 0.59$ & $40.31 \pm 0.91$ & $21.81 \pm 0.26$ & $19.45 \pm 0.86$ \\
& $7$ & $35.46 \pm 0.95$ & $62.85 \pm 0.62$ & $31.35 \pm 0.51$ & $40.40 \pm 0.76$ & $21.23 \pm 0.12$ & $19.18 \pm 0.24$ \\
\midrule
\multirow{3}{*}{ImageNet-100}
& $3$ & $15.53 \pm 0.81$ & $73.58 \pm 0.50$ & $67.62 \pm 0.69$ & $54.34 \pm 0.89$ & $41.96 \pm 0.95$ & $39.65 \pm 0.54$ \\
& $5$ & $25.58 \pm 0.12$ & $77.17 \pm 0.49$ & $71.90 \pm 0.48$ & $57.71 \pm 0.90$ & $45.35 \pm 0.11$ & $42.93 \pm 0.24$ \\
& $7$ & $39.13 \pm 0.63$ & $77.40 \pm 0.96$ & $71.13 \pm 0.94$ & $57.95 \pm 0.57$ & $45.59 \pm 0.23$ & $42.35 \pm 0.82$ \\
\bottomrule
\end{tabular}
}
\vspace{-5pt}
\end{table}

\subsection{Effect of Warm-up and Load Balancing}

We adopt the same warm-up strategy as~\cite{zhang2023robust} and the same load-balancing loss as~\cite{riquelme2021scaling} to examine whether these commonly used MoE training components benefit Robust CurveMoE. As shown in Table~\ref{tab:moe_training_components}, neither warm-up nor load balancing provides consistent improvements across the two datasets. On CIFAR-100, warm-up slightly improves Union accuracy, while load balancing yields only marginal gains on several norm-specific metrics. However, these improvements are not consistently observed on ImageNet-100, where the configuration without either component achieves the highest clean, $\ell_1$, and $\ell_\infty$ accuracy. Overall, the performance differences among the three settings are small and do not indicate a systematic benefit from introducing either warm-up or load balancing. We therefore omit both components in Robust CurveMoE, resulting in a simpler training procedure without sacrificing clean or multi-norm robustness.

\begin{table}[htbp]
\centering
\vspace{-5pt}
\caption{Effect of warm-up and load balancing on model performance.}
\label{tab:moe_training_components}
\resizebox{\columnwidth}{!}{
\begin{tabular}{lccccccc}
\toprule
Dataset & Warm-up & Load Balancing & Clean (\%) & $\ell_1$ (\%) & $\ell_2$ (\%) & $\ell_\infty$ (\%) & Union (\%) \\
\midrule
\multirow{3}{*}{CIFAR-100}
& Yes & No  & $58.28 \pm 0.45$ & $28.12 \pm 0.55$ & $36.14 \pm 0.30$ & $19.55 \pm 0.74$ & $18.90 \pm 0.68$ \\
& No  & Yes & $58.18 \pm 0.37$ & $28.62 \pm 0.78$ & $36.81 \pm 0.93$ & $19.77 \pm 0.49$ & $18.43 \pm 0.45$ \\
& No  & No  & $58.34 \pm 0.64$ & $28.40 \pm 0.70$ & $36.48 \pm 0.75$ & $19.71 \pm 0.27$ & $18.23 \pm 0.67$ \\
\midrule
\multirow{3}{*}{ImageNet-100}
& Yes & No  & $73.28 \pm 0.15$ & $67.38 \pm 0.43$ & $54.16 \pm 0.17$ & $41.16 \pm 0.64$ & $39.73 \pm 0.65$ \\
& No  & Yes & $73.30 \pm 0.51$ & $67.51 \pm 0.82$ & $54.79 \pm 0.64$ & $41.37 \pm 0.81$ & $39.53 \pm 0.35$ \\
& No  & No  & $73.58 \pm 0.50$ & $67.62 \pm 0.69$ & $54.34 \pm 0.89$ & $41.96 \pm 0.95$ & $39.65 \pm 0.54$ \\
\bottomrule
\end{tabular}
}
\vspace{-5pt}
\end{table}

\section{Conclusion}

In this work, we presented Robust CurveMoE, an efficient MoE-based framework for multi-norm adversarial defense. Instead of independently training multiple norm-specific experts, Robust CurveMoE derives complementary experts from a robust connectivity curve and exploits their diverse robustness profiles through input-dependent sparse routing. To improve efficiency, we selectively expertize only the layers most sensitive to adversarial objectives and introduce contribution-guided partial curve updating to reduce the cost of robust curve construction. We further restrict expert selection to cross-norm robust candidate models, mitigating vulnerabilities caused by adversarially manipulated routing toward overly specialized experts. Experiments on CIFAR-100 and ImageNet-100 with both convolutional and Transformer architectures demonstrate consistent improvements in clean, norm-specific, and Union accuracy over representative multi-norm and robust mode-connectivity baselines. The ablation studies further show that the proposed partial updating and selective expertization retain strong robustness with reduced training and parameter overhead, while commonly used MoE warm-up and load-balancing strategies provide no consistent benefit. Overall, these results demonstrate that robust mode connectivity provides a structured and efficient source of specialized experts and that selectively integrating them through sparse MoE routing offers an effective approach to multi-norm adversarial robustness.

%\section*{Acknowledgments}
%This should be a simple paragraph before the References to thank those individuals and institutions who have supported your work on this article.

\appendix[Proof of Theorem~\ref{thm:partial_update}]
\label{app:partial_update_proof}

\begin{proof}
Let
\begin{equation}
\widetilde{\boldsymbol{\omega}}_{\mathcal{S}}
=
\boldsymbol{\omega}_0
+
\boldsymbol{P}_{\mathcal{S}}
\boldsymbol{\Delta}^{*}
\end{equation}
be the point obtained by retaining only the optimization displacement in the parameter subspace specified by $\mathcal{S}$. Since
$\boldsymbol{\omega}^{*}
=
\boldsymbol{\omega}_0
+
\boldsymbol{\Delta}^{*}$,
we have
\begin{equation}
\widetilde{\boldsymbol{\omega}}_{\mathcal{S}}
-
\boldsymbol{\omega}^{*}
=
-
\boldsymbol{P}_{\mathcal{S}}^{\perp}
\boldsymbol{\Delta}^{*}.
\end{equation}

By the $L$-smoothness of $\mathcal{J}$,
\begin{equation}
\begin{aligned}
\mathcal{J}
\left(
\widetilde{\boldsymbol{\omega}}_{\mathcal{S}}
\right)
\leq\;&
\mathcal{J}
\left(
\boldsymbol{\omega}^{*}
\right)
+
\left\langle
\nabla
\mathcal{J}
\left(
\boldsymbol{\omega}^{*}
\right),
\widetilde{\boldsymbol{\omega}}_{\mathcal{S}}
-
\boldsymbol{\omega}^{*}
\right\rangle \\
&+
\frac{L}{2}
\left\|
\widetilde{\boldsymbol{\omega}}_{\mathcal{S}}
-
\boldsymbol{\omega}^{*}
\right\|_2^2.
\end{aligned}
\end{equation}
Because $\boldsymbol{\omega}^{*}$ is a stationary point of the full curve optimization problem,
\begin{equation}
\nabla
\mathcal{J}
\left(
\boldsymbol{\omega}^{*}
\right)
=
\boldsymbol{0}.
\end{equation}
Therefore,
\begin{equation}
\mathcal{J}
\left(
\widetilde{\boldsymbol{\omega}}_{\mathcal{S}}
\right)
-
\mathcal{J}
\left(
\boldsymbol{\omega}^{*}
\right)
\leq
\frac{L}{2}
\left\|
\boldsymbol{P}_{\mathcal{S}}^{\perp}
\boldsymbol{\Delta}^{*}
\right\|_2^2.
\end{equation}

It remains to relate
$\widetilde{\boldsymbol{\omega}}_{\mathcal{S}}$
to the optimal solution of the restricted problem. By the definition of
$\mathcal{F}_{\mathcal{S}}$,
choosing
$\boldsymbol{d}=\boldsymbol{\Delta}^{*}$
gives
\begin{equation}
\boldsymbol{\omega}_0
+
\boldsymbol{P}_{\mathcal{S}}
\boldsymbol{d}
=
\widetilde{\boldsymbol{\omega}}_{\mathcal{S}},
\end{equation}
and hence
$\widetilde{\boldsymbol{\omega}}_{\mathcal{S}}
\in
\mathcal{F}_{\mathcal{S}}$.
Since
$\boldsymbol{\omega}_{\mathcal{S}}^{*}$
minimizes $\mathcal{J}$ over
$\mathcal{F}_{\mathcal{S}}$,
\begin{equation}
\mathcal{J}
\left(
\boldsymbol{\omega}_{\mathcal{S}}^{*}
\right)
\leq
\mathcal{J}
\left(
\widetilde{\boldsymbol{\omega}}_{\mathcal{S}}
\right).
\end{equation}
Combining the above inequalities yields
\begin{equation}
\mathcal{J}
\left(
\boldsymbol{\omega}_{\mathcal{S}}^{*}
\right)
-
\mathcal{J}
\left(
\boldsymbol{\omega}^{*}
\right)
\leq
\frac{L}{2}
\left\|
\boldsymbol{P}_{\mathcal{S}}^{\perp}
\boldsymbol{\Delta}^{*}
\right\|_2^2,
\end{equation}
which proves Eq.~\eqref{eq:partial_update_bound}.
\end{proof}

%{\appendices
%\section*{Proof of the First Zonklar Equation}
%Appendix one text goes here.
% You can choose not to have a title for an appendix if you want by leaving the argument blank
%\section*{Proof of the Second Zonklar Equation}
%Appendix two text goes here.}

 % argument is your BibTeX string definitions and bibliography database(s)
%\bibliography{IEEEabrv,../bib/paper}
%

\bibliographystyle{IEEEtran}
\bibliography{ref}

@article{szegedy2013intriguing,
  title={Intriguing properties of neural networks},
  author={Szegedy, Christian and Zaremba, Wojciech and Sutskever, Ilya and Bruna, Joan and Erhan, Dumitru and Goodfellow, Ian and Fergus, Rob},
  journal={arXiv preprint arXiv:1312.6199},
  year={2013}
}

@inproceedings{maini2022perturbation,
  title={Perturbation type categorization for multiple adversarial perturbation robustness},
  author={Maini, Pratyush and Chen, Xinyun and Li, Bo and Song, Dawn},
  booktitle={Uncertainty in Artificial Intelligence},
  pages={1317--1327},
  year={2022},
  organization={PMLR}
}

@inproceedings{touvron2021training,
  title={Training data-efficient image transformers \& distillation through attention},
  author={Touvron, Hugo and Cord, Matthieu and Douze, Matthijs and Massa, Francisco and Sablayrolles, Alexandre and J{\'e}gou, Herv{\'e}},
  booktitle={International conference on machine learning},
  pages={10347--10357},
  year={2021},
  organization={PMLR}
}

@article{zagoruyko2016wide,
  title={Wide residual networks},
  author={Zagoruyko, Sergey and Komodakis, Nikos},
  journal={arXiv preprint arXiv:1605.07146},
  year={2016}
}

@inproceedings{deng2009imagenet,
  title={Imagenet: A large-scale hierarchical image database},
  author={Deng, Jia and Dong, Wei and Socher, Richard and Li, Li-Jia and Li, Kai and Fei-Fei, Li},
  booktitle={2009 IEEE conference on computer vision and pattern recognition},
  pages={248--255},
  year={2009},
  organization={Ieee}
}

@article{krizhevsky2009learning,
  title={Learning multiple layers of features from tiny images},
  author={Krizhevsky, Alex and Hinton, Geoffrey and others},
  year={2009},
  publisher={Toronto, ON, Canada}
}

@article{shazeer2017outrageously,
  title={Outrageously large neural networks: The sparsely-gated mixture-of-experts layer},
  author={Shazeer, Noam and Mirhoseini, Azalia and Maziarz, Krzysztof and Davis, Andy and Le, Quoc and Hinton, Geoffrey and Dean, Jeff},
  journal={arXiv preprint arXiv:1701.06538},
  year={2017}
}

@article{fedus2022switch,
  title={Switch transformers: Scaling to trillion parameter models with simple and efficient sparsity},
  author={Fedus, William and Zoph, Barret and Shazeer, Noam},
  journal={Journal of Machine Learning Research},
  volume={23},
  number={120},
  pages={1--39},
  year={2022}
}

@article{riquelme2021scaling,
  title={Scaling vision with sparse mixture of experts},
  author={Riquelme, Carlos and Puigcerver, Joan and Mustafa, Basil and Neumann, Maxim and Jenatton, Rodolphe and Susano Pinto, Andr{\'e} and Keysers, Daniel and Houlsby, Neil},
  journal={Advances in Neural Information Processing Systems},
  volume={34},
  pages={8583--8595},
  year={2021}
}

@article{puigcerver2022adversarial,
  title={On the adversarial robustness of mixture of experts},
  author={Puigcerver, Joan and Jenatton, Rodolphe and Riquelme, Carlos and Awasthi, Pranjal and Bhojanapalli, Srinadh},
  journal={Advances in neural information processing systems},
  volume={35},
  pages={9660--9671},
  year={2022}
}

@inproceedings{zhang2023robust,
  title={Robust mixture-of-expert training for convolutional neural networks},
  author={Zhang, Yihua and Cai, Ruisi and Chen, Tianlong and Zhang, Guanhua and Zhang, Huan and Chen, Pin-Yu and Chang, Shiyu and Wang, Zhangyang and Liu, Sijia},
  booktitle={Proceedings of the IEEE/CVF International Conference on Computer Vision},
  pages={90--101},
  year={2023}
}

@article{zhang2025optimizing,
  title={Optimizing robustness and accuracy in mixture of experts: A dual-model approach},
  author={Zhang, Xu and Xu, Kaidi and Hu, Ziqing and Wang, Ren},
  journal={arXiv preprint arXiv:2502.06832},
  year={2025}
}

@article{garipov2018loss,
  title={Loss surfaces, mode connectivity, and fast ensembling of dnns},
  author={Garipov, Timur and Izmailov, Pavel and Podoprikhin, Dmitrii and Vetrov, Dmitry P and Wilson, Andrew G},
  journal={Advances in neural information processing systems},
  volume={31},
  year={2018}
}

@inproceedings{draxler2018essentially,
  title={Essentially no barriers in neural network energy landscape},
  author={Draxler, Felix and Veschgini, Kambis and Salmhofer, Manfred and Hamprecht, Fred},
  booktitle={International conference on machine learning},
  pages={1309--1318},
  year={2018},
  organization={PMLR}
}

@inproceedings{frankle2020linear,
  title={Linear mode connectivity and the lottery ticket hypothesis},
  author={Frankle, Jonathan and Dziugaite, Gintare Karolina and Roy, Daniel and Carbin, Michael},
  booktitle={International conference on machine learning},
  pages={3259--3269},
  year={2020},
  organization={PMLR}
}

@inproceedings{wortsman2022model,
  title={Model soups: averaging weights of multiple fine-tuned models improves accuracy without increasing inference time},
  author={Wortsman, Mitchell and Ilharco, Gabriel and Gadre, Samir Ya and Roelofs, Rebecca and Gontijo-Lopes, Raphael and Morcos, Ari S and Namkoong, Hongseok and Farhadi, Ali and Carmon, Yair and Kornblith, Simon and others},
  booktitle={International conference on machine learning},
  pages={23965--23998},
  year={2022},
  organization={PMLR}
}

@article{zhao2020bridging,
  title={Bridging mode connectivity in loss landscapes and adversarial robustness},
  author={Zhao, Pu and Chen, Pin-Yu and Das, Payel and Ramamurthy, Karthikeyan Natesan and Lin, Xue},
  journal={arXiv preprint arXiv:2005.00060},
  year={2020}
}

@article{wang2023robust,
  title={Robust Mode Connectivity-Oriented Adversarial Defense: Enhancing Neural Network Robustness Against Diversified $\backslash$ $ell\_p $ Attacks},
  author={Wang, Ren and Li, Yuxuan and Liu, Sijia},
  journal={arXiv preprint arXiv:2303.10225},
  year={2023}
}

@article{goodfellow2014explaining,
  title={Explaining and harnessing adversarial examples},
  author={Goodfellow, Ian J and Shlens, Jonathon and Szegedy, Christian},
  journal={arXiv preprint arXiv:1412.6572},
  year={2014}
}

@article{madry2017towards,
  title={Towards deep learning models resistant to adversarial attacks},
  author={Madry, Aleksander and Makelov, Aleksandar and Schmidt, Ludwig and Tsipras, Dimitris and Vladu, Adrian},
  journal={arXiv preprint arXiv:1706.06083},
  year={2017}
}

@inproceedings{croce2020reliable,
  title={Reliable evaluation of adversarial robustness with an ensemble of diverse parameter-free attacks},
  author={Croce, Francesco and Hein, Matthias},
  booktitle={International conference on machine learning},
  pages={2206--2216},
  year={2020},
  organization={PMLR}
}

@article{tramer2019adversarial,
  title={Adversarial training and robustness for multiple perturbations},
  author={Tramer, Florian and Boneh, Dan},
  journal={Advances in neural information processing systems},
  volume={32},
  year={2019}
}

@inproceedings{maini2020adversarial,
  title={Adversarial robustness against the union of multiple perturbation models},
  author={Maini, Pratyush and Wong, Eric and Kolter, Zico},
  booktitle={International Conference on Machine Learning},
  pages={6640--6650},
  year={2020},
  organization={PMLR}
}

@inproceedings{croce2022adversarial,
  title={Adversarial Robustness against Multiple and Single $ l\_p $-Threat Models via Quick Fine-Tuning of Robust Classifiers},
  author={Croce, Francesco and Hein, Matthias},
  booktitle={International Conference on Machine Learning},
  pages={4436--4454},
  year={2022},
  organization={PMLR}
}

\newpage

%\section{Biography Section}
%If you have an EPS/PDF photo (graphicx package needed), extra braces are
% needed around the contents of the optional argument to biography to prevent
% the LaTeX parser from getting confused when it sees the complicated
% $\backslash${\tt{includegraphics}} command within an optional argument. (You can create
% your own custom macro containing the $\backslash${\tt{includegraphics}} command to make things
% simpler here.)
% 
%\vspace{11pt}
%
%\bf{If you include a photo:}\vspace{-33pt}
%\begin{IEEEbiography}[{\includegraphics[width=1in,height=1.25in,clip,keepaspectratio]{fig1}}]{Michael Shell}
%Use $\backslash${\tt{begin\{IEEEbiography\}}} and then for the 1st argument use $\backslash${\tt{includegraphics}} to declare and link the author photo.
%Use the author name as the 3rd argument followed by the biography text.
%\end{IEEEbiography}
%
%\vspace{11pt}
%
%\bf{If you will not include a photo:}\vspace{-33pt}
%\begin{IEEEbiographynophoto}{John Doe}
%Use $\backslash${\tt{begin\{IEEEbiographynophoto\}}} and the author name as the argument followed by the biography text.
%\end{IEEEbiographynophoto}

\vfill

\end{document}